\icmltitlerunning{Cross-Entropy Loss and Low-Rank Features Have Responsibility for Adversarial Examples}
\newtheorem{theo}{Theorem}
\newtheorem{lemma}{Lemma}
\newtheorem{corol}{Corollary}
\newtheorem{prop}{Proposition}
\theoremstyle{definition}
\newtheorem{remark}{Remark}
\begin{document}

\twocolumn[
\icmltitle{Cross-Entropy Loss and Low-Rank Features Have Responsibility\\ for Adversarial Examples}



\icmlsetsymbol{equal}{*}

\begin{icmlauthorlist}
\icmlauthor{Kamil Nar\ \ \ }{ucb}
\icmlauthor{Orhan Ocal\ \ \ }{ucb}
\icmlauthor{S.~Shankar Sastry\ \ \ }{ucb}
\icmlauthor{Kannan Ramchandran}{ucb}
\end{icmlauthorlist}

\icmlaffiliation{ucb}{\noindent Authors are with Department of Electrical Engineering and Computer Sciences, University of California, Berkeley\vspace{0.01in}}

\icmlcorrespondingauthor{Kamil Nar}{ \includegraphics[scale=0.4]{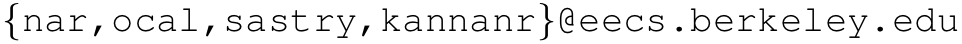}\vspace{-0.17in}}


\vskip 0.3in
]



\printAffiliationsAndNotice 

\begin{abstract}
State-of-the-art neural networks are vulnerable to adversarial examples; they can easily misclassify inputs that are imperceptibly different than their training and test data.
In this work, we establish that the use of cross-entropy loss function and the low-rank features of the training data have responsibility for the existence of these inputs.
Based on this observation, we suggest that addressing adversarial examples requires rethinking the use of cross-entropy loss function and looking for an alternative that is more suited for minimization with low-rank features.
In this direction, we present a training scheme called differential training, which uses a loss function defined on the differences between the features of points from opposite classes.
We show that differential training can ensure a large margin between the decision boundary of the neural network and the points in the training dataset. This larger margin increases the amount of perturbation needed to flip the prediction of the classifier and makes it harder to find an adversarial example with small perturbations. 
We test differential training on a binary classification task with CIFAR-10 dataset and demonstrate that it radically reduces the ratio of images for which an adversarial example could be found -- not only in the training dataset, but in the test dataset as~well. 

\end{abstract}

\section{Introduction}

Despite their high accuracy on training and test datasets, state-of-the-art neural networks are vulnerable to adversarial examples: they can easily misclassify inputs that are indistinguishable from the training and test data and express very high confidence for their wrong predictions~\cite{Szegedy}. 
Several methods have recently been introduced to generate these adversarial inputs~\cite{Goodfellow2015Adversarial, Carlini2017towards, Universal,pmlr-v80-athalye18a}; and simplicity and effectiveness of these methods have reinforced the concerns about the use of neural networks in many tasks.

The presence of adversarial examples was initially attributed to the high nonlinearity of deep neural networks~\cite{Szegedy}.
Later, however, it was shown that a network with few layers and a high dimensional input space could also suffer from this problem~\cite{Goodfellow2015Adversarial}.
Support vector machines with radial basis function, on the other hand, were robust to these malicious inputs: their accuracy on test datasets and adversarial examples were comparable.
Based on these observations, it was claimed that neural networks, unlike support vector machines, failed to introduce adequate nonlinearity as a feature mapping, and this was suggested to be the main explanation for the existence of adversarial examples~\cite{Goodfellow2015Adversarial}.

It is correct that neural networks and support vector machines differ in their level of nonlinearity and their level of robustness against adversarial examples, but this fact on its own does not suffice to build a causal relation between the adversarial examples and the nonlinearity of the classifier.
There are many other aspects that neural networks and support vector machines differ in and any of these factors may also have responsibility for the presence of adversarial examples.
A major one of these factors is the training~procedure.

Training a support vector machine involves solving a convex optimization problem defined with the hinge loss function~\cite{hastie_09_elements-of.statistical-learning}.
Due to convexity of the problem, the choice of optimization algorithm has no influence on the classifier obtained at the end of training.
In contrast, training a neural network requires solving a nonconvex problem, and the dynamics of the optimization algorithm becomes critical for the solution.
It determines the local optimum obtained, and hence, the decision boundary of the trained network.

The existence of adversarial examples is the manifestation of a poor margin between the decision boundary of the network and the points in the training and test datasets~\cite{Pascal}.
What is interesting is the closeness of the training points to the decision boundary: for some reason, the decision boundary resides extremely close to the training points even after the training is complete -- although the main purpose of training is to find a boundary that is reasonably far away from these points.
We seek out a reason for this poor margin among the ingredients of neural network training that are widely taken for granted: the gradient methods and the cross-entropy loss function.

\subsection{Our contributions}
\vspace{0.1in}
\begin{enumerate}[topsep=1ex,itemsep=2ex,partopsep=2ex,parsep=1ex]
\item We show that if a linear classifier is trained by minimizing the cross-entropy loss function via the gradient descent algorithm, and if the features of the training points lie on a low-dimensional affine subspace, then the margin between the decision boundary of the classifier and the training points could become much smaller than the optimal value.

\item We show that the penultimate layer of neural networks are very likely to produce low-rank features, and we provide empirical evidence for this on a binary classification task with CIFAR-10 dataset. Combined with the first contribution, this suggests that neural networks could have a poor margin in their penultimate layer, and consequently, very small perturbations in this layer can easily flip the decision of the classifier.

\item In order to improve the margin, we put forward a training scheme called \emph{differential training}, which uses a loss function defined on the differences between the features of the points from opposite classes. We show that this training scheme allows finding the solution with the largest hard margin for linear classifiers while still using the gradient descent algorithm.

\item We introduce a loss function that improves the margin for nonlinear classifiers and display its effectiveness on a synthetic problem. Then we test this loss function on a binary classification task with CIFAR-10 dataset, and show that it prevents the Projected Gradient Descent Attack~\cite{Madry, Kurakin2016} from being able to find an adversarial example for most of the training and test data.

\item On CIFAR-10 dataset, we empirically show that the network produced by differential training generalizes well over the adversarial examples.
That is, the accuracy of the network is virtually the same on adversarial examples generated from the training dataset and on those generated from the test dataset.
This result is critical given that the networks trained with robust optimization were shown not to generalize on adversarial examples~\cite{schmidt2018}. 
\end{enumerate}

\subsection{Related Works}
\label{sec:related-works}
The minimization of cross-entropy loss function via the gradient descent algorithm has recently been studied for linear classifiers, and its solution has been shown to be equivalent to a support vector machine~\cite{Soudry-March-2018}.
However, it has not been emphasized that the separating hyperplane produced by the cross-entropy minimization is constrained to pass through the origin in an augmented space.
We show that this fact could cause the margin of the classifier to be drastically small if the features of the dataset lie in a low-dimensional affine subspace in a high dimensional feature space.
We also show that this case is not atypical when a neural network is trained with the gradient descent algorithm, and we build a connection between this fact and the existence of adversarial examples.

It is known that if a support vector machine is formulated to find a separating hyperplane passing through the origin, the decision boundary of the classifier will be smaller than the optimal value.
In order to overcome this problem and to speed up online learning algorithms {\bf for support vector machines}, the idea of using the differences between the points from opposite classes has previously been suggested in~\cite{Ishibashi, Keerthi99}.
We show that a similar idea in differential training also improves the margin \textbf{when a neural network is being trained with a gradient-based method}.

Differential training uses the differences between the features of the training points from opposite classes. This training scheme has been intentionally introduced to improve the dynamics of the gradient descent algorithm on the training cost function; and we consider it as using an alternative cost function in the sequel since the choice of cost function is very critical.
However, the procedure could also be considered as using an identical pair of networks in the network architecture, which is closely related to the Siamese Networks~\cite{Bromley93,Chopra05}. These networks were previously shown to perform well if limited data were available from any of the classes in a classification task~\cite{SiameseOneShot}. Our work shows that this architecture can also provide a large margin between the decision boundary of the classifier and the training points, and consequently, be more robust to adversarial examples \textbf{if} the network is trained with the cost function we suggest in Section \ref{subsec:diff-nonlinear}.

\section{Cross-Entropy Loss on Low-Rank Features Leads to Poor Margins}

Cross-entropy loss function is almost the sole choice for classification tasks in practice. 
Its prevalent use is backed theoretically by its association with the minimization of the Kullback-Leibler divergence between the empirical distribution of a dataset and the confidence of the classifier for that dataset. 
Given the particular success of neural networks for classification tasks~\cite{AlexNet,VGG,Resnet}, there seems to be little motivation to search for alternatives for this loss function, and most of the software developed for neural networks incorporates an efficient implementation for it, thereby facilitating its further use.

Nevertheless, there seems to be a \textbf{typical} case where the use of cross-entropy loss function can create a problem for the classifier, as shown in Figure~\ref{fig:margin-figure}. The source of this problem is pointed out in Theorem~\ref{theo:exact-subspace}. 

\begin{figure}
\centering
\includegraphics[width=0.8\linewidth]{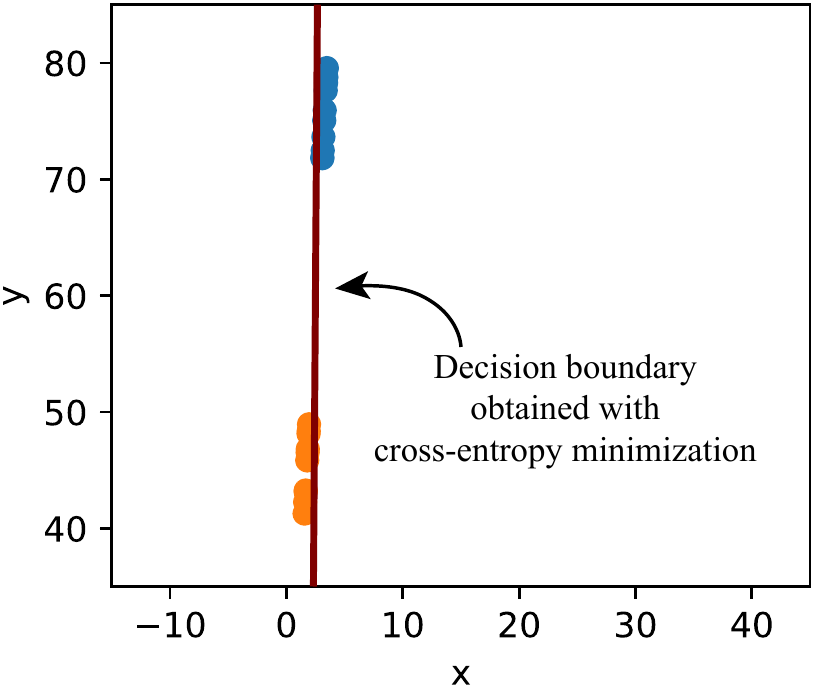}
\caption{Orange and blue points lie on a low-dimensional affine subspace in $\mathbb R^2$, and they represent the data from two different classes. Cross-entropy minimization for a linear classifier on these points leads to the decision boundary shown with the solid line, which attains an extremely poor margin.
}
\label{fig:margin-figure}
\end{figure}

\begin{theo} Assume that the points $\{x_i\}_{i\in I}$ and $\{y_j\}_{j\in J}$ are linearly separable and  lie in an affine subspace; that is, there exist a set of orthonormal vectors $\{r_k\}_{k \in K}$ and a set of scalars $\{\Delta_k\}_{k \in K}$ such that
\[ \langle r_k,  x_i \rangle = \langle r_k, y_j \rangle = \Delta_k \quad \forall i \in I,\ \forall j \in J,\ \forall k \in K. \]
Let $\langle \overline w, \cdot \rangle + B = 0$ denote the decision boundary obtained by minimizing the cross-entropy loss function
\begin{equation}
-\sum_{i \in I} \log\left( {e^{w^\top x_i + b} \over 1 + e^{w^\top x_i + b}} \right) 
-\sum_{j \in J} \log\left( {1 \over 1 + e^{w^\top y_j+b}} \right), \nonumber
\end{equation}
and assume that $\overline w$ and $B$ are scaled such that
\[ \min_{i \in I, j \in J} \ \langle \overline w, x_i \rangle - \langle \overline w, y_j \rangle = 2.
\]
Then the minimization of the cross-entropy loss yields a margin smaller than or equal to
\[ {1 \over \sqrt{ {1 \over {\gamma}^{2}} + B^2 \sum\nolimits_{k\in K}  \Delta_k^2  }} \]
where $\gamma$ denotes the optimal hard margin given by the SVM solution.
\label{theo:exact-subspace}
\end{theo}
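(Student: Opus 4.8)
The plan is to build on the characterization of cross-entropy minimization recalled in the related work: when gradient descent minimizes the given logistic loss on linearly separable data, the resulting direction converges to the hard-margin solution in the \emph{augmented} space. After the normalization in the statement, this means $(\overline w, B)$ is the minimizer of $\tfrac12(\|w\|^2 + b^2)$ subject to $w^\top x_i + b \ge 1$ for all $i \in I$ and $w^\top y_j + b \le -1$ for all $j \in J$. The crucial feature is that $b$ enters the objective, unlike the ordinary SVM defining $\gamma$, which minimizes $\tfrac12\|w\|^2$ alone over the same constraints. First I would record the KKT stationarity conditions of this augmented program, which give $\overline w = \sum_{i} \alpha_i x_i - \sum_{j} \beta_j y_j$ and, because $b$ is now penalized, $B = \sum_i \alpha_i - \sum_j \beta_j$ for nonnegative multipliers $\alpha_i, \beta_j$.

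The heart of the argument is to feed the affine-subspace hypothesis into these conditions. Since $\langle r_k, x_i\rangle = \langle r_k, y_j\rangle = \Delta_k$ for every point, taking the inner product of the expression for $\overline w$ with each $r_k$ collapses to $\langle r_k, \overline w\rangle = \Delta_k\big(\sum_i \alpha_i - \sum_j \beta_j\big) = B\,\Delta_k$. Thus the orthonormality of $\{r_k\}$ lets me split $\overline w = w_0 + \sum_{k\in K} B\Delta_k\, r_k$ with $w_0$ orthogonal to every $r_k$, and by the Pythagorean identity $\|\overline w\|^2 = \|w_0\|^2 + B^2\sum_{k\in K}\Delta_k^2$. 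This isolates exactly the extra term $B^2\sum_k \Delta_k^2$ appearing in the bound; it is the penalty the augmented objective pays for being forced to place mass along the $r_k$ directions, which carry no discriminative information because both classes share the same projection $\Delta_k$ there.

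It then remains to show $\|w_0\| \ge 1/\gamma$. I would verify that $(w_0, b_0)$ with $b_0 = B\big(1 + \sum_k \Delta_k^2\big)$ is feasible for the ordinary hard-margin SVM: subtracting $\sum_k B\Delta_k r_k$ from $\overline w$ shifts each score $w^\top x_i$ and $w^\top y_j$ by the same constant $B\sum_k\Delta_k^2$, which the adjusted offset $b_0$ exactly absorbs, so all margin constraints are preserved. Since the SVM minimizes $\|w\|$ over feasible pairs and $\gamma = 1/\|w^*\|$ for its minimizer $w^*$, feasibility of $(w_0, b_0)$ forces $\|w_0\| \ge \|w^*\| = 1/\gamma$. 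Combining this with the decomposition gives $\|\overline w\|^2 \ge 1/\gamma^2 + B^2\sum_k\Delta_k^2$, and because the normalization $\min_{i,j}\langle\overline w, x_i\rangle - \langle\overline w, y_j\rangle = 2$ makes the achieved margin equal to $1/\|\overline w\|$, the claimed bound follows. The main obstacle I anticipate is the first step: justifying rigorously that the cross-entropy/gradient-descent limit is precisely the augmented-space max-margin solution, so that the relation $B = \sum_i\alpha_i - \sum_j\beta_j$ is legitimately available, since the loss has no finite minimizer and the identification holds only in the limiting direction. The remaining algebra is routine once that identification and the relation $\langle r_k, \overline w\rangle = B\Delta_k$ are in hand.
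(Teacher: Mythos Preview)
Your proposal is correct and follows essentially the same route as the paper: identify $(\overline w,B)$ with the augmented hard-margin SVM (the paper does this via a lemma quoting Soudry et al., addressing exactly the obstacle you flagged), extract $\overline w=\sum_i\alpha_i x_i-\sum_j\beta_j y_j$ and $B=\sum_i\alpha_i-\sum_j\beta_j$ from KKT, compute $\langle r_k,\overline w\rangle=B\Delta_k$, split $\overline w$ orthogonally, and then lower-bound $\|w_0\|$ by $1/\gamma$ via feasibility for the unaugmented SVM. The only cosmetic difference is that the paper checks feasibility of $w_0$ against the pairwise-difference formulation $\langle w,x_i-y_j\rangle\ge2$, where the $r_k$-components cancel automatically and no bias is needed, whereas you carry along an explicit offset $b_0=B(1+\sum_k\Delta_k^2)$; both yield $\|w_0\|\ge1/\gamma$ immediately.
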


\begin{remark}
Theorem~\ref{theo:exact-subspace} shows that if the training points lie on an affine subspace, and if the cross-entropy loss is minimized with the gradient descent algorithm, then the margin of the classifier will be smaller than the optimal margin value. As the dimension of this affine subspace decreases, the cardinality of the set $K$ increases and the term $\sum_{k \in K} \Delta_k^2$ could become much larger than ${1/\gamma^2}$.
Therefore, as the dimension of the subspace containing the training points gets smaller compared to the dimension of the input space, cross-entropy minimization with a gradient method becomes more likely to yield a poor margin.
\label{remark:subspace}
\end{remark}

The next corollary relaxes the condition of Theorem~\ref{theo:exact-subspace} and allows the training points to be near an affine subspace instead of being exactly on it.

\begin{corol} Assume that the points $\{ x_i\}_{i \in I}$ and $\{y_j\}_{j \in J}$ in $\mathbb R^d$ are linearly separable and there exist a set of orthonormal vectors $\{r_k\}_{k \in K}$ and a set of scalars $\{\Delta_k\}_{k \in K}$ such that
\[ \langle r_k,  x_i \rangle \ge \Delta_k, \ \langle r_k, y_j \rangle \le \Delta_k \quad \forall i \in I, \ \forall j \in J, \ \forall k\in K.\]
Let $\langle \overline w, \cdot \rangle + B = 0$ denote the decision boundary obtained by minimizing the cross-entropy loss, as in Theorem 1. 
Then the minimization of the cross-entropy loss  yields  a margin smaller than or equal to
\[ {1 \over \sqrt{ B^2 \sum\nolimits_{k\in K}  \Delta_k^2  }} \]
\end{corol}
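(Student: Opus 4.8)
The plan is to mirror the proof of Theorem~\ref{theo:exact-subspace}, replacing the exact orthogonality identity used there by a one-sided inequality that survives the relaxation $\langle r_k, x_i\rangle \ge \Delta_k \ge \langle r_k, y_j\rangle$. First I would recall the characterization underlying Theorem~\ref{theo:exact-subspace}: minimizing the cross-entropy loss by gradient descent drives $(\overline w, B)$ to the direction of the hard-margin separator that passes through the origin in the augmented space, i.e.\ to the minimizer of $\|\overline w\|^2 + B^2$ subject to $\langle \overline w, x_i\rangle + B \ge 1$ and $\langle \overline w, y_j\rangle + B \le -1$. Under the normalization inherited from Theorem~\ref{theo:exact-subspace} the support vectors attain functional values $\pm 1$, so the geometric margin in the original space equals $1/\|\overline w\|$. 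Hence it suffices to establish the lower bound $\|\overline w\|^2 \ge B^2\sum_{k\in K}\Delta_k^2$.

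Next I would introduce the augmented direction $\tilde r_k = (r_k, -\Delta_k)$ and exploit the representer form of the SVM solution. By the KKT conditions the minimizer can be written as a nonnegative combination of the signed data points, $(\overline w, B) = \sum_i \lambda_i (x_i, 1) - \sum_j \mu_j (y_j, 1)$ with $\lambda_i, \mu_j \ge 0$. Pairing $(\overline w,B)$ with $\tilde r_k$ then yields
\[
\langle \overline w, r_k\rangle - B\Delta_k = \sum_i \lambda_i\bigl(\langle r_k, x_i\rangle - \Delta_k\bigr) + \sum_j \mu_j\bigl(\Delta_k - \langle r_k, y_j\rangle\bigr).
\]
In Theorem~\ref{theo:exact-subspace} every bracket vanishes and one recovers the exact identity $\langle \overline w, r_k\rangle = B\Delta_k$; here the hypotheses make every bracket nonnegative, so the identity weakens to the one-sided bound $\langle \overline w, r_k\rangle \ge B\Delta_k$ for each $k\in K$.

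Finally, orthonormality of $\{r_k\}$ gives $\|\overline w\|^2 \ge \sum_{k\in K}\langle \overline w, r_k\rangle^2$, and I would combine this with the per-coordinate bound to reach $\sum_k\langle\overline w, r_k\rangle^2 \ge B^2\sum_k\Delta_k^2$, whence $1/\|\overline w\| \le 1/\sqrt{B^2\sum_k\Delta_k^2}$. This route also explains why the $1/\gamma^2$ term of Theorem~\ref{theo:exact-subspace} is dropped: without the equalities $\langle r_k, x_i\rangle = \langle r_k, y_j\rangle$ the relation $\langle\overline w, x_i\rangle - \langle\overline w, y_j\rangle = \langle w_\perp, x_i - y_j\rangle$ for the in-subspace component $w_\perp$ breaks, so $\|w_\perp\|$ can no longer be tied to the optimal hard margin $\gamma$, and only the contribution along the $r_k$ directions can be retained.

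The main obstacle is the last implication: passing from the linear bound $\langle\overline w, r_k\rangle \ge B\Delta_k$ to the quadratic bound $\langle\overline w, r_k\rangle^2 \ge (B\Delta_k)^2$, since a one-sided linear inequality does not by itself control a square. The implication is immediate when $B\Delta_k \ge 0$, because then $\langle\overline w, r_k\rangle \ge B\Delta_k \ge 0$. In the general near-subspace regime I would instead control the cross term $\sum_k B\Delta_k\bigl(\langle\overline w, r_k\rangle - B\Delta_k\bigr)$ by using that the minimum-norm property of the SVM solution keeps each gap $\langle\overline w, r_k\rangle - B\Delta_k = \langle\tilde w, \tilde r_k\rangle$ no larger than the data slacks $\langle r_k,x_i\rangle-\Delta_k$ and $\Delta_k-\langle r_k,y_j\rangle$ allow; the smaller the deviation of the points from the affine subspace, the tighter $\langle\overline w, r_k\rangle \approx B\Delta_k$ and the closer the conclusion is to that of Theorem~\ref{theo:exact-subspace}. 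This cross term is the single place where the precise strength of the relaxation enters, and it is where I would concentrate the analysis.
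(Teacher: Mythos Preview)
Your approach mirrors the paper's almost exactly: both invoke the KKT/representer form $\overline w=\sum_i\mu_i x_i-\sum_j\nu_j y_j$, $B=\sum_i\mu_i-\sum_j\nu_j$ with $\mu_i,\nu_j\ge0$, deduce the one-sided bound $\langle\overline w,r_k\rangle\ge B\Delta_k$ from the relaxed hypotheses, and then use orthonormality of $\{r_k\}$ to conclude $\|\overline w\|^2\ge\sum_k\langle\overline w,r_k\rangle^2\ge B^2\sum_k\Delta_k^2$. You also correctly isolate the only delicate step, namely squaring the linear inequality.

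Where you diverge is in how that step is handled. The paper does not analyze any cross term or appeal again to the minimum-norm property; it simply reduces, at the very start, to the case $B\ge0$ by the symmetry that replaces $\{x_i\},\{y_j\}$ with $\{-x_i\},\{-y_j\}$ and the boundary $\langle\overline w,\cdot\rangle+B=0$ with $\langle\overline w,\cdot\rangle-B=0$, which leaves the margin unchanged. With $B\ge0$ in hand, the paper passes directly from $\langle\overline w,r_k\rangle\ge B\Delta_k$ to $\langle\overline w,r_k\rangle^2\ge B^2\Delta_k^2$. So your proposed cross-term analysis is unnecessary: the missing ingredient is just this one-line WLOG reduction on the sign of $B$. (The paper's presentation tacitly treats $B\Delta_k\ge0$ after this reduction; your observation that the squaring needs a sign condition is apt, but the paper's intended fix is the symmetry argument, not a quantitative control of the slacks.)
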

Note that the ability to compare the margin obtained by cross-entropy minimization with the optimal value is lost.
Nevertheless, it highlights the fact that same set of points could be assigned a substantially different margin by cross-entropy minimization if all of them are shifted away from the origin by the same amount in the same direction.

\section{Penultimate Layers of Neural Networks Contain Low-Rank Features}

The results in the previous section were for linear classifiers, and correspondingly, the features of the training points were the points themselves. In this section, we consider neural networks and regard the outputs of their penultimate layer as the features of the training points. Following theorem shows that these features can have a very low rank if the network is trained with a gradient method.

\begin{prop} Given a set of points $\{x_i\}_{i \in I}$, assume that an $L$-layer network is trained by minimizing the cross-entropy loss function:
\[ \min_{w, \theta}\  \sum\nolimits_{i \in I} - \log\left( {e^{w^\top \phi_\theta (x_i)} \over 1 + e^{w^\top \phi_\theta (x_i)}} \right) \]
where $\phi_\theta(x_i)$ is the output of the penultimate layer of the network and represents the features for point $x_i$. Assume that $\phi_\theta$ ends with a linear layer, i.e., 
\[ \phi_\theta(\cdot) = W \cdot h_\theta(\cdot) \]
where $W$ is a matrix and $h_\theta(\cdot)$ is the first $L-2$ layers of the network. If the gradient descent algorithm is initialized with $W[0] = 0$, then the rank of the set $\{ \phi_{\hat \theta} (x_i) \}_{i \in I}$ is at most 1 whenever the algorithm is terminated.
\end{prop}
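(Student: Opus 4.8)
The plan is to track how the final linear weight $w$ and the last linear layer $W$ co-evolve under gradient descent, and to show that both remain confined to a single fixed direction fixed by the initialization. First I would compute the two relevant gradients. Writing $h_i := h_\theta(x_i)$ and $z_i := w^\top W h_i$, the per-example logistic loss has scalar derivative $\partial \ell_i / \partial z_i = -1/(1+e^{z_i}) =: c_i$, so that
\[ \nabla_W \mathcal{L} = \sum_{i\in I} c_i\, w\, h_i^\top = w\, v^\top, \qquad \nabla_w \mathcal{L} = \sum_{i \in I} c_i\, W h_i = W v, \]
where $v := \sum_{i\in I} c_i h_i$ is a single vector that I do not need to evaluate. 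The crucial structural observation is that $\nabla_W \mathcal{L}$ is a \emph{rank-one} matrix whose column space is $\mathrm{span}\{w\}$, while $\nabla_w \mathcal{L}$ always lies in the column space of $W$.

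Next I would set up a coupled induction on the iteration index $t$, claiming that $W[t] = w[0]\, u_t^\top$ for some vector $u_t$ and $w[t] = \alpha_t\, w[0]$ for some scalar $\alpha_t$. The base case uses the hypothesis $W[0] = 0$: this makes $\nabla_w \mathcal{L}[0] = W[0]\, v[0] = 0$, so $w$ does not move on the first step and remains equal to $w[0]$, while $W[1] = -\eta\, w[0]\, v[0]^\top$ already has the claimed form. For the inductive step I would substitute the assumed forms into the simultaneous updates $w[t+1] = w[t] - \eta\, W[t]\, v[t]$ and $W[t+1] = W[t] - \eta\, w[t]\, v[t]^\top$; using $W[t] = w[0]\, u_t^\top$ and $w[t] = \alpha_t\, w[0]$, both updates collapse to a scalar multiple of $w[0]$ and to an outer product with left factor $w[0]$, respectively, yielding $\alpha_{t+1} = \alpha_t - \eta\, u_t^\top v[t]$ and $u_{t+1} = u_t - \eta\, \alpha_t\, v[t]$. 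Thus the one-dimensional subspace $\mathrm{span}\{w[0]\}$ is invariant under the dynamics.

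Finally I would conclude by noting that at any termination time the features satisfy $\phi_{\hat\theta}(x_i) = W h_i = (u_t^\top h_i)\, w[0]$, so every feature vector is a scalar multiple of the fixed vector $w[0]$; hence the matrix whose columns are $\{\phi_{\hat\theta}(x_i)\}_{i\in I}$ has rank at most one. I expect the main obstacle to be the bookkeeping of the interaction between the two simultaneously updated blocks $w$ and $W$: one must verify that the vanishing of $\nabla_w\mathcal{L}$ at initialization is precisely what aligns $w$ with the column space that $W$ subsequently acquires, and that the parameters $\theta$ of $h_\theta$ being updated in parallel is harmless, since they only alter the otherwise-arbitrary vector $v[t]$ at each step and never enlarge the rank-one structure of $W$.
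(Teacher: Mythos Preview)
Your proposal is correct and follows essentially the same argument as the paper: both compute $\nabla_W \mathcal{L} = w\,v^\top$ and $\nabla_w \mathcal{L} = W v$, then use the initialization $W[0]=0$ to conclude that $w$ remains a scalar multiple of $w[0]$ and the column space of $W$ never leaves $\mathrm{span}\{w[0]\}$. The only cosmetic difference is that the paper phrases the dynamics in continuous time (gradient flow $\dot W = w\,v^\top$, $\dot w = W v$) rather than your discrete-time induction, and the paper does not spell out the harmless role of the $\theta$-updates as explicitly as you do.
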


The assumption on the initialization of the matrix $W$ could be removed if the network has a certain structure -- for example, if the last layer of $h_\theta(\cdot)$ ends with a squishing function such as $\arctan$ or $\tanh$. In this case, the points in  $\{\phi_\theta(x_i)\}_{i\in I}$ keep growing in the same direction if the algorithm is run for long enough, and consequently, this set converges to a set with rank 1 as well. More detail on this case is provided in Appendix \ref{appendix:nonzero-init}.

Note that the only strong assumption in Proposition 1 is the requirement that $\phi_\theta$ ends with a linear layer. Otherwise, $\phi_\theta$ is allowed to contain any type of nonlinear activation functions and convolutional layers.


To empirically verify whether the features in a neural network are still low-rank even when the penultimate layer is nonlinear, we trained a standard network with ReLU activations for a binary classification task on CIFAR-10 dataset. The cross-entropy loss function was minimized with three different optimization schemes to train the network. 
Even though all parameters of the network were initialized as in~\cite{he2015delving},
the features in the penultimate layer had rank 2 if the training cost was minimized via the gradient method with momentum. When the optimization algorithm was changed to Adam or when batch normalization was used during training, the rank of the features still remained much lower than the dimension of the feature~space, as shown in Figure~2.

\begin{figure}[t]
\centering
\includegraphics[width=0.8\linewidth]{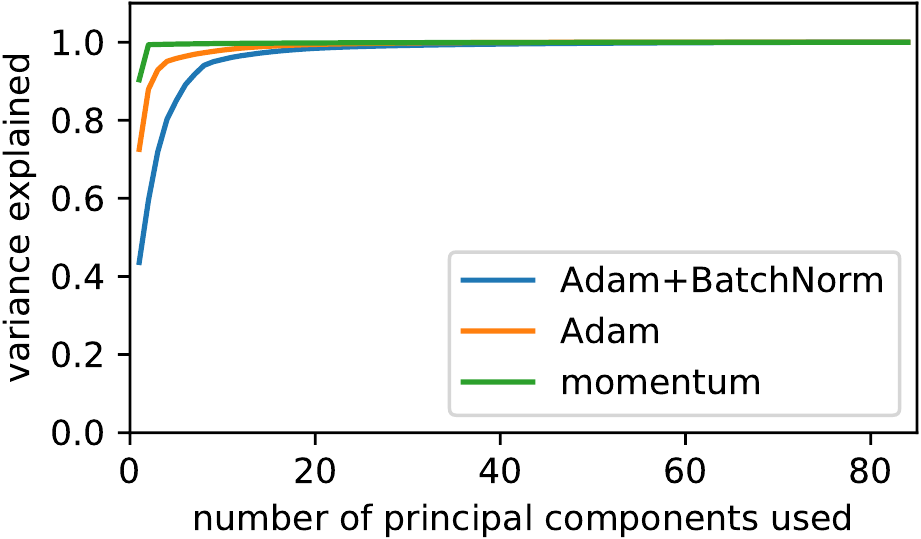}
\caption{The outputs of the penultimate layer of a neural network can be considered as the features of the training points. A four-layer convolutional network is trained by minimizing the cross-entropy loss function via three different optimization schemes. The plot shows the cumulative variance explained for these features as a function of the number of principle components used.
The features lie in a two-dimensional subspace if the gradient method with momentum is used. For the other two algorithms, almost all the variance in the features is captured by the first $20$ principle components out of $84$.}
\end{figure}

\begin{remark} Proposition 1, along with the empirical observations on CIFAR-10 dataset, shows that the low-rankness of the features of the training dataset is not an exceptional case; on the contrary, it can arise in most cases. This is recently supported by~\cite{mahoney2018} as well. 
\end{remark}

Along with the main result of Section 2, the fact that penultimate layer of the network contains low-rank features indicates a small margin between the decision boundary of the classifier and the features in this layer. In other words, small perturbations in the penultimate layer can easily flip the decision of the classifier.



\section{Differential Training Improves Margin}
In previous sections, we saw that the combination of cross-entropy loss function, low-rank features of training dataset, and gradient descent algorithm could lead to a poor margin.
We change the training cost function in the following subsections in order to increase the margin of the classifier.

\subsection{Differential Training for Linear Classifiers}

Consider the binary classification problem with only two training points, $x$ and $y$, from two different classes. If we use cross-entropy loss function to find a linear classifier by minimizing
\[ - \log\left( {e^{w^\top x + b} \over 1 + e^{w^\top x + b}} \right) - \log \left( {1 \over 1 + e^{w\top y + b}} \right),\]
the gradient descent algorithm gives the update rule:
\begin{equation}
w \leftarrow w + \eta \left( x  {e^{-w^\top x - b} \over 1 + e^{-w^\top x - b}} 
-y  {e^{w^\top y + b} \over 1 + e^{w^\top y + b}}  \right)
\label{eqn:weight_update} \end{equation}
where $\eta$ is the learning rate of the algorithm. The update rule for $w$ reveals a critical fact:
even though the optimal direction for $w$ is $x-y$, the increments in $w$ are usually not in this direction.

Now consider the problem of finding a separating hyperplane for a linearly separable dataset.
If the dataset is low rank, the differences between the training points span a low-dimensional subspace. However, at each iteration of the gradient descent algorithm, the increments on the normal vector of the decision boundary will usually contain components outside of this subspace, as can be seen in (\ref{eqn:weight_update}). These increments could be forced to lie in the same subspace by feeding the differences of the points from opposite classes -- instead of the points themselves -- into the loss function. In fact, a loss function of this form enables finding the separating hyperplane with the largest margin with the gradient descent algorithm.

\begin{theo} Given two sets of points $\{ x_i\}_{i\in I}$ and $\{y_j\}_{j \in J}$ that are linearly separable in $\mathbb R^d$, if we solve
\begin{equation} \min_{w \in \mathbb R^d} \ \sum\nolimits_{i \in I} \sum\nolimits_{j \in J} \log(1 + e^{-w^\top ( x_i -  y_j)} ) \label{brand_new_cost} \end{equation}
by using the gradient descent algorithm with a sufficiently small learning rate, then the direction of $w$ converges to the direction of the maximum-margin solution, i.e.
\begin{equation}
 \lim_{t \to \infty} {w(t) \over \|w(t)\|} = {w_\text{SVM} \over \|w_\text{SVM}\|}, \label{thm6_eqn} \end{equation}
where $w_\text{SVM}$ is the solution to the hard-margin SVM problem.
\end{theo}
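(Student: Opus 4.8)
The plan is to recognize that the objective \eqref{brand_new_cost} is exactly the logistic-regression loss, \emph{with no bias term}, for a classification problem whose data points are the pairwise differences $z_{ij} := x_i - y_j$, all carrying the label $+1$. Writing $\sigma(u) = 1/(1+e^{-u})$, the cost is $\sum_{i,j}\log(1+e^{-w^\top z_{ij}})$, a convex function of $w$; this reduction is the crux of the argument. First I would check that the difference set is separable through the origin: since $\{x_i\}$ and $\{y_j\}$ are linearly separable, there is a pair $(w^*,b^*)$ with $w^{*\top}x_i + b^* \ge 1$ and $w^{*\top}y_j + b^* \le -1$; combining these gives $w^{*\top}z_{ij} \ge 2 > 0$, so the same $w^*$ separates every $z_{ij}$ from the origin homogeneously, and hence the minimizer of \eqref{brand_new_cost} lies at infinity.

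Next I would identify the $\ell_2$ maximum-margin direction of this homogeneous problem with the direction of $w_\text{SVM}$. Eliminating the bias from the hard-margin SVM shows it is equivalent to $\min \tfrac12\|w\|^2$ subject to $\min_{i} w^\top x_i - \max_{j} w^\top y_j \ge 2$, i.e. $w^\top z_{ij} \ge 2$ for all $i,j$. The max-margin program for the differences is the same feasibility set with right-hand side $1$ in place of $2$; the two optimizers therefore differ only by the scalar factor $2$ and have \emph{identical direction}. Hence it suffices to prove that gradient descent on \eqref{brand_new_cost} drives $w(t)/\|w(t)\|$ to the max-margin direction of the points $\{z_{ij}\}$.

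With the problem in this homogeneous, bias-free logistic form, I would invoke the implicit-bias analysis of \cite{Soudry-March-2018}, whose hypotheses are met verbatim by $\{z_{ij}\}$. The argument proceeds in three stages: (i) because the data are separable, the loss has no finite minimizer and gradient descent with a sufficiently small step drives $L(w(t))\to 0$ and $\|w(t)\|\to\infty$; (ii) the negative gradient $-\nabla L(w) = \sum_{i,j} z_{ij}\,\sigma(-w^\top z_{ij})$ has weights that decay exponentially in the margins $w^\top z_{ij}$, so as $\|w(t)\|$ grows the gradient concentrates on the \emph{minimal-margin} differences (the support vectors), and matching the exponential rates forces the asymptotic direction to satisfy the SVM stationarity (KKT) conditions; (iii) one writes $w(t) = \hat w \log t + r(t)$, where $\hat w$ is the max-margin vector, and shows $r(t)$ stays bounded relative to the $\log t$ growth, so normalizing yields \eqref{thm6_eqn}.

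The main obstacle is step (iii): controlling the residual $r(t)$ and proving that the combined contribution of the non-support-vector differences to the gradient is summable, so that the dominant $\hat w\log t$ term dictates the limiting direction. This is delicate because the set $\{z_{ij}\}$ can be highly degenerate --- it contains $|I|\,|J|$ strongly correlated points and possibly several tied support vectors --- and the exponential weights must be shown to align with the SVM dual variables. Since our reduction matches the hypotheses of \cite{Soudry-March-2018} exactly, I would either cite their residual bound directly or reproduce their key lemma showing that the angle between $-\nabla L(w(t))$ and $\hat w$ tends to zero; the remaining bookkeeping about the factor-of-two rescaling and the elimination of the bias is then routine.
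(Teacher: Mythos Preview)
Your proposal is correct and follows essentially the same route as the paper: reduce \eqref{brand_new_cost} to a bias-free logistic loss on the difference points $z_{ij}=x_i-y_j$, invoke the implicit-bias result of \cite{Soudry-March-2018} (packaged in the paper as Lemma~1) to get convergence in direction to the max-margin separator of $\{z_{ij}\}$, and then observe that this separator is $\tfrac12 w_\text{SVM}$ via the rescaled SVM formulation \eqref{eq:new-svm}. The paper's proof is a two-line application of Lemma~1 and does not revisit the residual analysis you describe in step~(iii); your concern about degeneracy of the $z_{ij}$ is already absorbed by the hypotheses of \cite{Soudry-March-2018}, so you can simply cite that result rather than re-deriving it.
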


Minimization of the cost function (\ref{brand_new_cost}) provides only the weight parameter $\hat w$ of the decision boundary. The bias parameter, $b$, could be chosen by plotting the histogram of the inner products $\{\langle \hat w, x_i \rangle \}_{i \in I}$ and $\{ \langle \hat w, y_j \rangle \}_{j \in J}$ and fixing a value for $\hat b$ such that
\begin{subequations}
\begin{align}
\langle \hat w, x_i \rangle + \hat b \ge 0 & \quad \forall i \in I, \label {eqmel1}\\
\langle \hat w, y_j \rangle + \hat b \le 0 & \quad \forall j \in J. \label{eqmel2}
\end{align}
\end{subequations}
The largest hard margin is achieved by 
\begin{equation} \hat b = -{1 \over 2} \min_{i \in I} \langle \hat w, x_i \rangle  - {1 \over 2} \max_{j \in J} \langle \hat w, y_j \rangle.\label{bias_choice} \end{equation}
However, by choosing a larger or smaller value for $\hat b$, it is possible to make a tradeoff between the Type-I and Type-II errors.

The cost function (\ref{brand_new_cost}) includes a loss defined on every pair of data points from the two classes. There are two aspects of this fact:
\begin{enumerate}
\item When standard loss functions are used for classification tasks, we need to oversample or undersample either of the classes if the training dataset contains different  number of points from different classes. This problem does not arise when we use the cost function (\ref{brand_new_cost}). 
\item The number of pairs, $|I|\times |J|$, will usually be much larger than the size of the original dataset, which contains $|I| + |J|$ points. Therefore, the minimization of  (\ref{brand_new_cost}) might appear more expensive than the minimization of the standard cross-entropy loss computationally. However, if the points in different classes are well separated and the stochastic gradient method is used to minimize (\ref{brand_new_cost}), the algorithm could achieve zero  training error after using only a few pairs, which is formalized in Theorem \ref{theo:well-separate}. Further computation is needed only to improve the margin of the classifier. In addition, in our experiments to train a neural network to classify two classes from the CIFAR-10 dataset, only a few percent of $|I| \times |J|$ pairs were observed to be sufficient to reach an accuracy on the test dataset that is comparable to the accuracy of the cross-entropy loss minimization.
\end{enumerate}

\begin{theo} 
\label{theo:well-separate} Given two sets of points $\{x_i\}_{i \in I}$ and $\{y_j\}_{j \in J}$ that are linearly separable in $\mathbb R^d$, assume the cost function (\ref{brand_new_cost}) is minimized with the stochastic gradient method. Define \begin{gather*} R_x = \max\{ \|x_i - x_{i'}\| : i, i' \in I\},\\ R_y = \max\{ \|y_j - y_{j'} \| : j, j' \in J\},\end{gather*} and let $\gamma$ denote the hard margin that would be obtained with the SVM: 
\[ 2\gamma = \max\nolimits_{u \in \mathbb R^d} \min\nolimits_{i \in I, j \in J}  \ \langle x_i -y_j , {u / \|u\|} \rangle. \]
If $2\gamma \ge {5} \max (R_x, R_y)$, then the stochastic gradient algorithm produces a weight parameter, $\hat w$, only in one iteration which satisfies the inequalities (\ref{eqmel1})-(\ref{eqmel2}) along with the bias, $\hat b$, given by (\ref{bias_choice}). 
\end{theo}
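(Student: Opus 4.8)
The plan is to reduce the claim to a purely geometric statement about a single difference vector. First I would determine what one step of the stochastic gradient method actually produces. Starting the algorithm from $w=0$ and sampling a single pair $(i_0,j_0)$, the gradient of the summand $\log(1+e^{-w^\top(x_i-y_j)})$ evaluated at $w=0$ is $-\tfrac12(x_{i_0}-y_{j_0})$, so after one iteration $\hat w = \tfrac{\eta}{2}(x_{i_0}-y_{j_0})$. Since both the inequalities (\ref{eqmel1})--(\ref{eqmel2}) and the bias rule (\ref{bias_choice}) are invariant under positive scaling of $\hat w$, it suffices to work with the direction $\hat w = x_{i_0}-y_{j_0}$ for an arbitrary sampled pair $(i_0,j_0)$.

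Next I would rewrite the feasibility condition. Substituting (\ref{bias_choice}) into (\ref{eqmel1})--(\ref{eqmel2}), the worst case of each family is attained at the minimizing $x_i$ and the maximizing $y_j$, and both collapse to the single requirement $\min_{i\in I}\langle\hat w,x_i\rangle \ge \max_{j\in J}\langle\hat w,y_j\rangle$. In other words, $\hat w$ must separate the two projected point sets, which is equivalent to $\langle\hat w,\, x_i-y_j\rangle\ge 0$ for every pair $(i,j)$. This eliminates $\hat b$ entirely and turns the theorem into a statement about inner products of difference vectors.

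The core step is then to verify $\langle x_{i_0}-y_{j_0},\,x_i-y_j\rangle\ge 0$ for all $i,j$ under the margin hypothesis. I would decompose $x_i-y_j = (x_{i_0}-y_{j_0})+(x_i-x_{i_0})-(y_j-y_{j_0})$, so that the inner product equals $\|x_{i_0}-y_{j_0}\|^2$ plus two cross terms bounded in absolute value, via Cauchy--Schwarz, by $\|x_{i_0}-y_{j_0}\|\,R_x$ and $\|x_{i_0}-y_{j_0}\|\,R_y$. Factoring out $\|x_{i_0}-y_{j_0}\|$ leaves $\|x_{i_0}-y_{j_0}\|-R_x-R_y$, which is nonnegative as soon as $\|x_{i_0}-y_{j_0}\|\ge R_x+R_y$. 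The last ingredient is the lower bound $\|x_{i_0}-y_{j_0}\|\ge 2\gamma$: taking the optimal unit direction $u^\ast$ from the definition of $\gamma$, we have $\|x_{i_0}-y_{j_0}\|\ge\langle x_{i_0}-y_{j_0},u^\ast\rangle\ge 2\gamma$, and this holds for every pair. Combining these, $2\gamma\ge R_x+R_y$ already forces feasibility, and since $R_x+R_y\le 2\max(R_x,R_y)$, the hypothesis $2\gamma\ge 5\max(R_x,R_y)$ is more than sufficient.

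I expect the only delicate point to be bookkeeping rather than a genuine obstacle: one must check that the bound $\|x_{i_0}-y_{j_0}\|\ge 2\gamma$ is uniform over all pairs the stochastic method might draw (so the conclusion is independent of the random choice), and that the reduction through the bias rule preserves the non-strict inequalities exactly. The comfortable gap between the sufficient condition $2\gamma\ge R_x+R_y$ and the stated $2\gamma\ge 5\max(R_x,R_y)$ suggests that the constant $5$ is not tight and is chosen for a cleaner statement; I would either keep their constant or remark that it can be lowered to $2$.
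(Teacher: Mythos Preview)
Your argument is correct and, in fact, tighter than the paper's. Both proofs reduce the claim to the same geometric condition $\langle x_{i_0}-y_{j_0},\,x_i-y_j\rangle\ge 0$ for all pairs, but they diverge in how they verify it. The paper decomposes every difference vector along the SVM direction, writing $x_i-y_j = 2\gamma\,w_{\mathrm{SVM}} + \delta^x_i + \delta^y_j$ with $\|\delta^x_i\|\le R_x$, $\|\delta^y_j\|\le R_y$, and then expands the inner product of two such expressions; this produces a quadratic inequality $4\gamma^2 - 4\gamma(R_x+R_y) - (R_x+R_y)^2 > 0$, for which the hypothesis $2\gamma\ge 5\max(R_x,R_y)$ is sufficient. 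You instead anchor the decomposition at the sampled pair itself, writing $x_i-y_j = (x_{i_0}-y_{j_0}) + (x_i-x_{i_0}) - (y_j-y_{j_0})$, so that the squared norm $\|x_{i_0}-y_{j_0}\|^2$ appears directly and factors out. This buys you a simpler bound and a sharper constant: your argument already goes through under $2\gamma\ge R_x+R_y$, hence under $2\gamma\ge 2\max(R_x,R_y)$, whereas the paper's quadratic only clears once $2\gamma>(1+\sqrt{2})(R_x+R_y)$. Your remark that the constant $5$ can be lowered is therefore justified, and in fact can be lowered further than you state.
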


\subsection{Differential Training for Nonlinear Classifiers}
\label{subsec:diff-nonlinear}

When a neural network is used to find a nonlinear classifier, a candidate cost function analogous to (\ref{brand_new_cost}) for differential training would be
\begin{equation} \sum\nolimits_{i \in I} \sum\nolimits_{j \in J} \log\left(1 + e^{-w^\top ( \phi_\theta(x_i) - \phi_\theta( y_j))} \right) \label{eqn:nonlinear-differential} \end{equation}
where $\phi_\theta(\cdot)$ is the output of the penultimate layer of the network and represents the features of the points. 
However, {\bf minimization of (\ref{eqn:nonlinear-differential}) has been observed to fail in providing a large margin in the input space} in our experiments. One reason for this is that the minimization of (\ref{eqn:nonlinear-differential}) does not guarantee a small Lipschitz constant for the mapping $\phi_\theta$. Therefore, even if the margin is large in the penultimate layer, the margin in the input space could still be very small. 

A \textbf{cost function that does provide a large margin} in the input space is
\begin{equation}  \sum\nolimits_{i\in I} \sum\nolimits_{j \in J} \left( w^\top \phi_\theta(x_i) - w^\top \phi_\theta(y_j) - 1 \right)^2. \label{squared-loss} \end{equation}
A partial explanation for the different behavior of this function is that the gradient descent algorithm is more likely to converge to a solution with small Lipschitz constant if the network is trained with the squared error loss~\cite{nipsStepSize}. Consequently, the gradient method is more likely to produce a $\phi_\theta$ which has a small Lipschitz constant, and this implies that the input of $\phi_\theta$ needs to change by a large amount in order for its output to move across the decision~boundary.


The effect of training with the cost function (\ref{squared-loss}) on the margin of a nonlinear classifier is demonstrated in Figure~\ref{fig:nonlinear-plot}.
A neural network with one hidden layer was trained with two different training cost functions: cross-entropy loss and the differential training cost (\ref{squared-loss}). The minimization of cross-entropy loss provided an extremely poor margin in the input space, whereas the use of (\ref{squared-loss}) lead to a decision boundary with large margins.

\begin{figure}
\centering
\includegraphics[width=0.75\linewidth, trim={0 4mm 0 0}, clip]{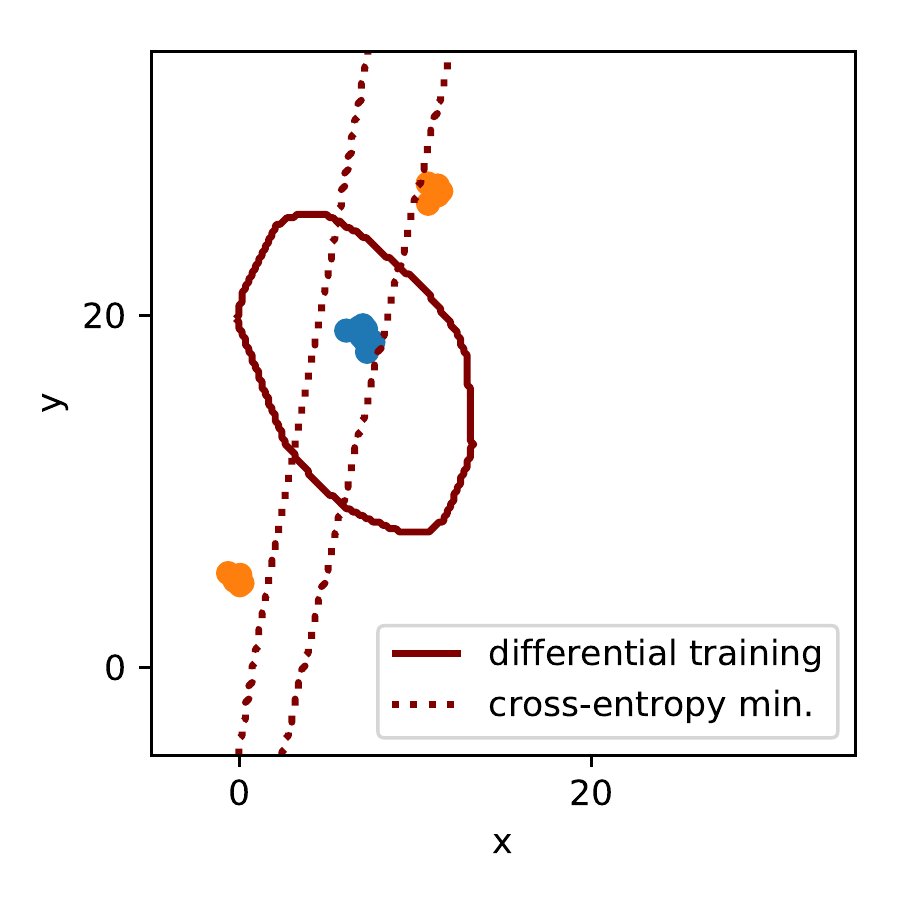}
\caption{A two-layer neural network is trained with two different cost functions. Cross-entropy minimization marks the region between the dotted lines as the class of blue points, whereas the same class is assigned to the region inside the solid curve when differential training is used. Note that the decision boundaries obtained with cross-entropy minimization have extremely small~margins.}
\vspace{-1mm}
\label{fig:nonlinear-plot}
\end{figure}

\section{Experiment on CIFAR-10: Differential Training Removes Adversarial Examples}
A large margin between the decision boundary of the classifier and the points in the training dataset is expected to make it harder to find adversarial examples for these points. 
In order to verify if this is the case, we trained a four-layer convolutional neural network for a binary classification task on CIFAR-10 dataset by only using the images for planes and horses.
Both cross-entropy minimization and differential training achieved zero error on the training dataset, and
the accuracies of both training schemes were comparable on the test dataset: cross-entropy minimization lead to 93.65\% while differential training yielded 94.65\%.

We generated adversarial examples for the images in the training dataset using Projected Gradient Descent Attack (PGD) implemented by~\cite{foolbox}.
The robustness of the neural network against these adversarial examples was substantially different based on whether the network was trained with the cross-entropy loss or the differential training cost (\ref{squared-loss}).

As shown in Figure \ref{fig:pgd-attack}, PGD was able to find adversarial examples for the images in the training dataset with small perturbations if the network was trained with the cross-entropy loss. In contrast, if the network was trained with differential training, PGD failed to find adversarial examples for the training dataset without disturbing the images by a large amount. Please note that PGD was considered to be the most powerful first-order gradient-based attack in~\cite{Madry}.

Somewhat surprisingly, the same behavior was observed on the test dataset as well. As displayed in Figure \ref{fig:pgd-attack}, PGD failed to find adversarial examples for most of the images in the test dataset when the network was trained via differential training. Moreover, the accuracy of the network was almost the same for adversarial examples generated from the training dataset and for those generated from the test~dataset.

\begin{figure}
\includegraphics[width=\linewidth]{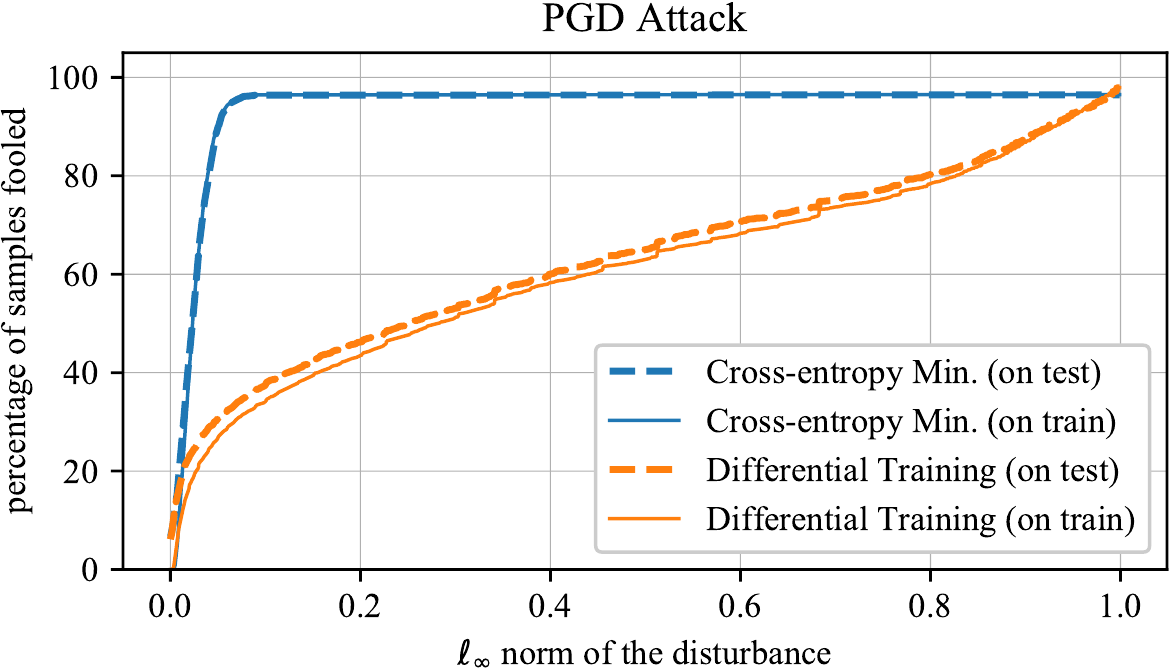}
\vspace{-4mm}
\caption{A four-layer convolutional neural network is trained for a binary classification task on CIFAR-10 dataset with two different training schemes: cross-entropy minimization and differential training.
If the network is trained with differential training, the accuracy of the network is much higher for the adversarial examples generated from the training and test datasets with the PGD Attack.
Moreover, the accuracy of the network on the adversarial examples generated from the training dataset is almost the same as its accuracy on those generated from the test dataset.
Solid lines denote the accuracy on adversarial examples generated from the training dataset, and dashed lines denote the accuracy on adversarial examples generated from the test dataset.
}
\label{fig:pgd-attack}
\end{figure}

We also tested the network under the Carlini-Wagner Attack~\cite{Carlini2017towards} implemented by~\cite{foolbox}. Similar to its performance under PGD Attack, the accuracy of the network trained with differential training remained much higher compared to the network trained with cross-entropy minimization, as shown in Figure~\ref{fig:carlini-wagner-attack}.

\begin{figure}
\includegraphics[width=\linewidth]{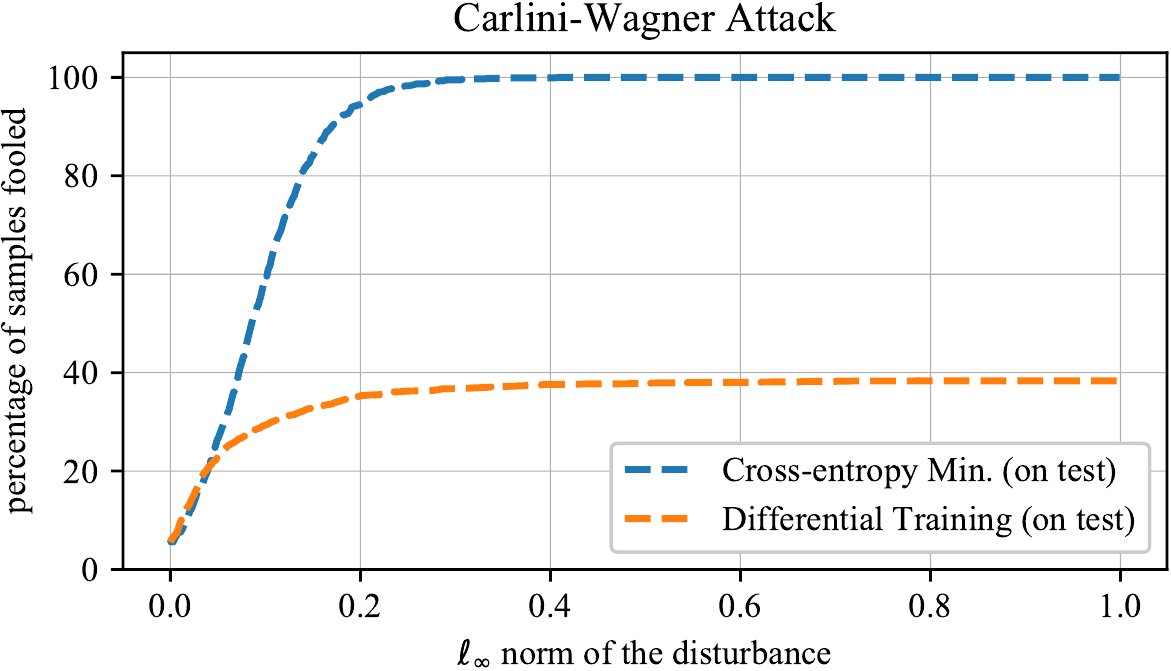}
\vspace{-5mm}
\caption{A four-layer convolutional network is trained with two different schemes: cross-entropy minimization and differential training. If the network is
trained with differential training,
the accuracy of the network 
is much higher on the adversarial examples generated from the test dataset with the Carlini-Wagner Attack.}
\label{fig:carlini-wagner-attack}
\end{figure}



\section{Discussion}

{\bf Low-dimensionality of the training dataset.} As stated in Remark 1, as the dimension of the affine subspace containing the training dataset gets very small compared to the dimension of the input space, the training algorithm will become more likely to yield a small margin for the classifier. This observation confirms the results of~\cite{Ramtin}, which showed that if the training dataset is projected onto a low-dimensional subspace before being fed into a neural network, the performance of the network against adversarial examples is improved -- since projecting the inputs onto a low-dimensional domain  corresponds to decreasing the dimension of the input space. Even though this method is effective, it requires the knowledge of the domain in which the training points are low-dimensional. Because this knowledge will not always be available a priori, finding alternative training algorithms and loss functions that are suited for low-dimensional data is still an important direction for future research.

{\bf Robust optimization.} 
Using robust optimization to train neural networks has been shown to be effective against adversarial examples~\cite{Madry, pmlr-v80-athalye18a}.
Note that these techniques could be considered as inflating the training points by a presumed amount and training the classifier with these inflated points. Nevertheless, as long as the cross-entropy loss is involved, the decision boundaries of the neural network will still be in the vicinity of the inflated points. Therefore, even though the classifier is robust against the disturbances of the presumed magnitude, the margin of the classifier could still be much smaller than what it could potentially be.

{\bf Differential training.} We introduced differential training, which allows the feature mapping to remain trainable while ensuring a large margin between different classes of points. By doing so, this method combines the benefits of neural networks with those of support vector machines. Even though moving from $2N$ training points to $N^2$ pairs might seem prohibitive, it points out that a true classification should in fact be able to differentiate between the pairs that are hardest to differentiate, and this search will necessarily require an $N^2$ term. Some heuristic methods are likely to be effective, such as considering only a smaller subset of points closer to the boundary and updating this set of points as needed during training. If a neural network is trained with this procedure, the network will be forced to find features that are able to tell apart between the hardest pairs.

{\bf Generalization of differential training, and its connection to one-shot learning.}
It has been shown that if a neural network is trained with robust optimization, the accuracy of the network on adversarial examples generated from the test dataset could be very low -- even though the accuracy on adversarial examples produced from the training dataset is high~\cite{schmidt2018}.
Consequently, it has been claimed that the robust optimization requires large amount of data so as to make a network robust against adversarial perturbations on the unseen images.
Our empirical results on CIFAR-10 dataset suggest that differential training does not suffer from this problem. That is, differential training provides neural networks with robustness while still using fewer data. This is in congruence with the main premise of~\cite{SiameseOneShot}, which showed that Siamese networks with an identical pair of networks in their architecture perform well with few training points. Please see Section \ref{sec:related-works} for further comments on the relation between differential training and Siamese networks.

{\bf Why not empirical risk minimization with a well-known loss function?} Consider the standard problem of empirical risk minimization as the proxy for finding a classifier:
\begin{equation}
 \min_{w, \theta} \sum\nolimits_{i \in I} \ell \left( w, \phi_\theta(x_i); z_i \right) \label{eq:erm} \end{equation}
where $z_i$ denotes the label of the point $x_i$, and $(w,\theta)$ are the parameters of the classifier. If the features of the training points $\{\phi_\theta(x_i)\}_{i \in I}$ lie in a low-dimensional subspace, the cost function (\ref{eq:erm}) will likely not be strictly convex; and more importantly, there will be directions in which the parameters are not penalized. 
Normally, the remedy would be to introduce a regularization term into the cost function. However, the effectiveness of well-known regularization terms is dubious for neural networks: they do not prevent spectral norms of weight matrices from growing unboundedly~\cite{BartlettSpectral}, nor do they influence the generalization gap of networks noticeably~\cite{RechtUnderstanding}.
Therefore, even if a regularization term is added externally, the gradient descent algorithm will have the potential to drive the parameters in the directions that are not penalized and cause the decision boundary to reside in the vicinity of the training points. Note that the loss function $\ell(\cdot)$ need not be the cross-entropy loss for this to happen.
This is why the problem of poor margins is in fact not peculiar to the cross-entropy loss,
and this is why other well-known loss functions will likely also fail in addressing adversarial examples.

\appendix

\section{Proof of Theorem 1 and Corollary 1}
\begin{lemma}[Adapted from Theorem 3 of \citep{Soudry-March-2018}] Given two sets of points $\{x_i\}_{i \in I}$ and $\{y_j\}_{j \in J}$ that are linearly separable in $\mathbb R^d$, let $\tilde x_i$ and $\tilde y_j$ denote $[ x_i^\top \ 1]^\top$ and $[y_j^\top \ 1]^\top$, respectively, for all $i \in I$, $j \in J$. Then the iterate of the gradient descent algorithm, $\tilde w(t)$, on the cross-entropy loss function
\begin{equation*}  \min_{\tilde w \in \mathbb R^{d+1}} \sum\nolimits_{i \in I} \log(1 + e^{-\tilde w^\top \tilde x_i}) + \sum\nolimits_{j \in J} \log(1 + e^{\tilde w^\top \tilde y_j}) \label{loss_cross_ent} \end{equation*}
 with a sufficiently small step size will converge in direction:
 \[ \lim_{t \to \infty} {\tilde w(t) \over \|\tilde w(t)\|} = {\overline w \over \|\overline w\|}, \] 
where $\overline w$ is the solution to 
\begin{align}
\label{eqn:svm-in-difference}
\underset{z \in \mathbb R^{d+1}}{\text{\emph{minimize}}}  &\quad  \|z\|^2  \\
\text{subject to} & \quad \langle z, \tilde x_i \rangle \ge 1 \quad \forall i \in I, \nonumber \\
&\quad  \langle z, \tilde y_j\rangle  \le 1 \quad  \forall j \in J. \nonumber \end{align}
\end{lemma}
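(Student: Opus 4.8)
The plan is to recognize the displayed objective as a homogeneous logistic-regression loss and to invoke the implicit-bias analysis of \citet{Soudry-March-2018} after a change of variables. First I would fold the two class sums into one by introducing the signed augmented points $u_n$ ranging over $\{\tilde x_i\}_{i \in I} \cup \{-\tilde y_j\}_{j \in J}$. Since $\log(1 + e^{\tilde w^\top \tilde y_j}) = \log(1 + e^{-\tilde w^\top(-\tilde y_j)})$, the whole cost becomes $\sum_n \log(1 + e^{-\tilde w^\top u_n})$, i.e. the logistic loss evaluated on $\{u_n\}$ with every label equal to $+1$. Linear separability of $\{x_i\}$ and $\{y_j\}$ is then exactly the statement that some $z$ satisfies $\langle z, u_n\rangle > 0$ for all $n$, which is the separability hypothesis required by \citet{Soudry-March-2018}.

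Next I would check that $\ell(u) = \log(1+e^{-u})$ meets the tail conditions of that analysis: $\ell$ is smooth and decreasing, and $\ell'(u) \sim -e^{-u}$ as $u \to \infty$ with strictly faster-decaying corrections. Under these conditions, for a sufficiently small step size the loss decreases monotonically, $\|\tilde w(t)\| \to \infty$, and the iterate admits the asymptotic form $\tilde w(t) = \hat w \log t + \rho(t)$ with a controlled residual $\rho(t)$; here $\hat w$ is the $\ell_2$ max-margin vector, i.e. the minimizer of $\|z\|^2$ subject to $\langle z, u_n\rangle \ge 1$ for all $n$. Dividing by the norm and letting $t \to \infty$ gives $\tilde w(t)/\|\tilde w(t)\| \to \hat w/\|\hat w\|$, since the $\log t$ term dominates the bounded residual.

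Finally I would translate the max-margin constraints back to the two-class notation. Unfolding $u_n$ turns $\langle z, u_n\rangle \ge 1$ into $\langle z, \tilde x_i\rangle \ge 1$ on the positive points and $\langle z, \tilde y_j\rangle \le -1$ on the negative points, so $\hat w$ is exactly the hard-margin SVM solution $\overline w$ of \eqref{eqn:svm-in-difference}, and the claimed directional limit follows. The step I expect to be the main obstacle is not the reduction itself but verifying that the bias coordinate does not spoil the residual control: because the appended constant is treated as an ordinary feature and is therefore penalized inside $\|z\|^2$, I must confirm that the augmented point set $\{u_n\}$ still satisfies the nondegeneracy (general-position of support vectors) conditions under which \citet{Soudry-March-2018} keep $\rho(t)$ bounded. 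Once this is checked, the normalization washes out in the limit and the argument is complete.
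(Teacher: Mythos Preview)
Your reduction is exactly how the paper treats this lemma: it is stated without proof as an adaptation of Theorem~3 of \cite{Soudry-March-2018}, so invoking that result after absorbing the bias and the class labels into the signed augmented vectors $u_n$ is precisely the intended route. One remark: your unfolding correctly yields $\langle z,\tilde y_j\rangle \le -1$ rather than the ``$\le 1$'' printed in \eqref{eqn:svm-in-difference}; the latter appears to be a typo in the paper, since the subsequent proof of Theorem~1 uses $\langle \overline w, x_i - y_j\rangle \ge 2$, which only follows from the $-1$ constraint.
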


{\bf Proof of Theorem 1.} 
Assume that $\overline w = u + \sum_{k=1}^m \alpha_k r_k$, where $u \in \mathbb R^{d}$ and $\langle u, r_k \rangle = 0$ for all $k \in K$. By denoting $z = [w^\top \ b]^\top$, 
the Lagrangian of the problem (\ref{eqn:svm-in-difference}) can be written as
\begin{gather*} {1\over 2} \| w\|^2 + {1 \over 2}b^2 + \sum\nolimits_{i\in I} \mu_i ( 1 - \langle  w,  x_i \rangle - b)\\ + \sum\nolimits_{j \in J} \nu_j (-1 + \langle  w,  y_j \rangle +b ), \end{gather*}
where $\mu_i \ge 0$ for all $i \in I$ and $\nu_j \ge 0$ for all $j \in J$.
KKT conditions for the optimality of $\overline w$ and $B$ requires that
\[ \overline w = \sum_{i\in I} \mu_i  x_i - \sum_{j \in J} \nu_j  y_j,\ \
 B = \sum_{i \in I} \mu_i - \sum_{j \in J} \nu_j,\]
and consequently, for each $k \in K$,
\begin{eqnarray*} \langle \overline w, r_k \rangle & = & \sum\nolimits_{i\in I} \mu_i \langle  x_i , r_k \rangle - \sum\nolimits_{j\in J} \nu_j \langle y_j, r_k \rangle \\ & = & \sum\nolimits_{i \in I} \Delta_k\mu_i - \sum\nolimits_{j \in J} \Delta_k \nu_j = B\Delta_k. \end{eqnarray*}
Then, we can write $\overline w$ as
\[ \overline w = u + \sum\nolimits_{k\in K} B\Delta_k r_k. \]
Let $ \langle w_\text{SVM}, \cdot \rangle + b_\text{SVM} = 0$ denote the hyperplane obtained as the solution of SVM. Then $w_\text{SVM}$ solves
%
%
%
%
\begin{align}  \underset{w}{\text{{minimize}}} & \quad \|  w \|^2  \label{eq:new-svm} \\
\text{subject to} & \quad \langle  w, x_i - y_j\rangle \ge 2 \quad \forall i \in I, \forall j \in J. \nonumber
\end{align}
Since the vector $u$ also satisfies $\langle u, x_i - y_j \rangle = \langle  w,  x_i -  y_j \rangle \ge 2$ for all $i \in I,j \in J$, we have $\|u\| \ge \|w_\text{SVM}\| = {1 \over \gamma}$. As a result, the margin obtained by minimizing the cross-entropy loss is 
\[ {1 \over \|\overline w\|} = { 1 \over \sqrt{ \|u\|^2 + \sum  \|B\Delta_k r_k\|^2}} \le {1 \over \sqrt{ {1 \over \gamma^2} + B^2\sum \Delta_k^2}}. \tag*{$\blacksquare$}\]

{\bf Proof of Corollary 1.}
If $B <0$, we could consider the hyperplane $\langle \overline w, \cdot \rangle - B = 0$ for the points $\{- x_i\}_{i\in I}$ and $\{-y_j\}_{j \in J}$, which would have the identical margin due to symmetry. Therefore, without loss of generality, assume $B \ge 0$. As in the proof of Theorem 1, KKT conditions for the optimality of $\overline w$ and $B$ requires
\[ \overline w = \sum_{i \in I} \mu_i  x_i - \sum_{j \in J} \nu_j  y_j , \ \ B = \sum_{i \in I} \mu_i - \sum_{j \in J} \nu_j \]
where $\mu_i \ge 0$ and $\nu_j \ge 0$ for all $i \in I, j \in J$. Note that for each $k \in K$,
\begin{eqnarray*}
 \langle \overline w, r_k \rangle & = & \sum\nolimits_{i \in I} \mu_i \langle  x_i, r_k\rangle - \sum\nolimits_{j \in J} \nu_j \langle y_j, r_k\rangle \\
 & = & B\Delta_k + \sum\nolimits_{i \in I} \mu_i (\langle x_i, r_k\rangle - \Delta_k ) \\
 & & - \sum\nolimits_{j \in J} \nu_j (\langle - y_j, r_k\rangle - \Delta_k ) \ \ge\  B\Delta_k.
 \end{eqnarray*}
 Since $\{r_k\}_{k\in K}$ is an orthonormal set of vectors,
 \[ \|\overline w\|^2 \ge \sum\nolimits_{k\in K} \left\langle \overline w, r_k \right\rangle^2 \ge \sum\nolimits_{k \in K} B^2 \Delta_k^2.\]
 The result follows from the fact that ${\|\overline w\|}^{-1}$ is an upper bound on the margin. \hfill $\blacksquare$

\section{Proposition 1 and Nonzero Initialization}
\label{appendix:nonzero-init}

Gradient descent algorithm on
\[ \sum\nolimits_{i \in I} \log(1 + e^{-w^\top W h_\theta(x_i)} ) \]
leads to the dynamics
\begin{equation}
 \dot W = w v^\top, \quad \dot w = W v,   \label{eqn:exog}
 \end{equation}
where
\[ v = \sum\nolimits_{i \in I} h_\theta(x_i) { e^{-w^\top Wh_\theta(x_i)} \over 1 + e^{-w^\top Wh_\theta(x_i)}}. \]
If $W(0)= 0$, then $w$ preserves its direction and $w(t) = w(0)\alpha(t)$ for all $t\ge 0$, where $\alpha(\cdot): [0,\infty) \to \mathbb R$. Consequently, the column space of $W(t)$ is spanned by only $w(0)$, and $W(t)$ has rank 1 or 0 for every $t\ge0$. This completes the proof of Proposition 1. In order to make a statement without the condition on $W(0)$, we need the following lemma.

\begin{lemma} \label{lemma:small} Consider the $n \times n$ matrix
\[
\left[ \begin{array}{l l} \mathbf{0} & v \\ v^\top & 0 \end{array} \right]
\]
where $v \in \mathbb R^{n-1}$ and assume $n \ge 2$. It has only one positive eigenvalue, $\|v\|_2$, with the eigenvector $[v^\top \ \|v\|_2]^\top$.
\end{lemma}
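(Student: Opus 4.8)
The plan is to diagonalize the matrix directly by exploiting its block structure, since every eigenpair can be read off from two scalar/vector equations. Write $M$ for the matrix, and partition any candidate eigenvector as $[u^\top \ c]^\top$ with $u \in \mathbb R^{n-1}$ and $c \in \mathbb R$. Then the eigenvalue equation $M[u^\top \ c]^\top = \lambda [u^\top \ c]^\top$ separates into the coupled pair
\[ c\, v = \lambda u, \qquad \langle v, u \rangle = \lambda c. \]
I would first assume $v \neq 0$; if $v = 0$ the matrix is identically zero, $\|v\|_2 = 0$ is not positive, and there is nothing to prove.

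First I would dispose of $\lambda = 0$. The equations become $c v = 0$ and $\langle v, u \rangle = 0$, and since $v \neq 0$ this forces $c = 0$ and $u \perp v$, so the kernel is exactly $\{[u^\top \ 0]^\top : u \perp v\}$, which has dimension $n-2$. This accounts for $n-2$ of the $n$ eigenvalues. Next, for $\lambda \neq 0$, the first equation gives $u = (c/\lambda)\, v$, and substituting into the second yields $(c/\lambda)\,\|v\|_2^2 = \lambda c$. Any nonzero eigenvector with $\lambda \neq 0$ must have $c \neq 0$ (otherwise $u = 0$ as well), so $\lambda^2 = \|v\|_2^2$ and hence $\lambda = \pm\|v\|_2$. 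Taking $\lambda = \|v\|_2$ and $c = \|v\|_2$ recovers $u = v$ and the eigenvector $[v^\top \ \|v\|_2]^\top$, which one can also confirm by direct multiplication, since $M[v^\top \ \|v\|_2]^\top = [\|v\|_2\, v \ \ \|v\|_2^2]^\top = \|v\|_2\,[v^\top \ \|v\|_2]^\top$; the choice $\lambda = -\|v\|_2$ supplies the single remaining eigenvalue.

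The only point requiring care---the \emph{obstacle}, such as it is---is completeness: I must confirm that $\pm\|v\|_2$ together with the $(n-2)$-fold eigenvalue $0$ exhaust the spectrum, so that no second positive eigenvalue can hide. This follows from the count $1 + 1 + (n-2) = n$; because $M$ is symmetric it is diagonalizable, so its algebraic multiplicities sum to $n$ and leave no room for further eigenvalues. Consequently $\|v\|_2$ is the unique positive eigenvalue, with eigenvector $[v^\top \ \|v\|_2]^\top$, as claimed.

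Everything here is a routine computation, so beyond the completeness count there is no genuine difficulty; I would simply keep the degenerate case $v = 0$ flagged so the statement is interpreted correctly.
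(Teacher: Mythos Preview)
Your argument is correct. The paper's proof takes a slightly different and shorter route: it observes that the matrix has rank at most $2$, so it can have at most two nonzero eigenvalues, and then simply \emph{verifies} that $[v^\top\ \|v\|_2]^\top$ and $[v^\top\ -\|v\|_2]^\top$ are eigenvectors with eigenvalues $\|v\|_2$ and $-\|v\|_2$. Your approach instead \emph{derives} the eigenpairs by solving the block equations and then uses a dimension count (the $(n-2)$-dimensional kernel plus the two nonzero eigenvalues) together with symmetry to rule out anything else. The rank observation buys brevity and avoids the case split on $\lambda = 0$ versus $\lambda \neq 0$; your computation buys a constructive derivation rather than a guess-and-check, and makes the kernel explicit. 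Either is perfectly adequate for a lemma this elementary.
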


\begin{proof} The matrix is at most rank 2, so it has at most 2 nonzero eigenvalues. The vectors $[v^\top \ \|v\|_2]^\top$ and $[v^\top \ -\|v\|_2]^\top$ are its eigenvectors corresponding to the eigenvalues $\|v\|_2$ and $-\|v\|_2$, respectively. 
\end{proof}

In the dynamics (\ref{eqn:exog}), if we consider $v(t)$ as an exogenous signal, the system described becomes a linear time-varying system of the states $(W,w)$. Moreover, the dynamics of each row of the pair $(W,w)$ is independent of the other rows, but is governed by the same matrix. For example, the $k^\text{th}$ row of the pair $(W,w)$ satisfies:
\begin{equation}
\left[ \begin{array}{c} \dot W_{k1} \\ \vdots \\ \dot W_{kn} \\ \dot w_k \end{array} \right] = 
\left[ \begin{array}{c c} \mathbf{0} & v(t) \\ v(t)^\top & 0 \end{array} \right]
\left[ \begin{array}{c}  W_{k1} \\ \vdots \\  W_{kn} \\  w_k \end{array} \right]. 
\label{W-w-dynamics} \end{equation}

If the last layer of $h_\theta$ ends with a squishing function such as $\arctan$ or $\tanh$, and if all training points are classified correctly during training, the dynamics of $v$ becomes
\[ \dot v \simeq -\sum_{i \in I} h_\theta(x_i)e^{-w^\top W h_\theta(x_i)} (v^\top W^\top W + \|w\|^2v^\top)h_\theta(x_i)  \]
if the network is trained for long enough.
Then the change in $v$ becomes exponentially slower than those in $W$ and $w$ as the training continues. Consequently, the vector $v(t)$ in (\ref{W-w-dynamics}) acts as a constant vector; and from Lemma \ref{lemma:small}, each row of the matrix $W$ grows in the direction $v(t)$ by the same ratio. As a result, if the algorithm is run for long, all rows of $W$ converge to the same direction. Correspondingly, all of its columns converge to a set with rank 1 (or 0).

\section{Proof of Theorem 2}
Apply Lemma 1 by replacing the sets $\{x_i\}_{i \in I}$ and $\{y_j\}_{j \in J}$ with $\{x_i - y_j\}_{i \in I, j \in J}$ and the empty set, respectively. Then the minimization of the loss function (\ref{brand_new_cost}) with the gradient descent algorithm leads to
\[ \lim_{t \to \infty} {w \over \|w\|}  = {\overline w \over \| \overline w \|} \]
where $\overline w$ satisfies
\[ \overline w = \arg\min_{w } \|w\|^2 \ \text{ s.t. } \ \langle w, x_i - y_j \rangle \ge 1 \ \forall i \in I, \ \forall j \in J. \]
Since $w_\text{SVM}$ is the solution of (\ref{eq:new-svm}),
we obtain $\overline w = {1 \over 2} w_\text{SVM}$, and the claim of the theorem holds. \hfill $\blacksquare$

\section{Proof of Theorem 3}
In order to achieve zero training error in one iteration of the stochastic gradient algorithm, it is sufficient to have
\[ \min_{i' \in I} \langle x_{i'}, x_i - y_j \rangle > \max_{j' \in J} \langle y_{j'}, x_i - y_j \rangle \ \forall i \in I, \ \forall j \in J, \]
or equivalently,
\begin{equation} \langle x_{i'} - y_{j'} , x_i - y_j \rangle > 0 \quad \forall i, i' \in I, \ \forall j, j' \in J. \label{differdiffer} \end{equation}
By definition of the margin, there exists a vector $w_\text{SVM} \in \mathbb R^d$ with unit norm which satisfies
\[ 2\gamma = \min\nolimits_{i \in I, j \in J} \langle x_i - y_j , w_\text{SVM} \rangle. \]
Note that $w_\text{SVM}$ is orthogonal to the decision boundary given by the SVM. Then we can write every $x_i - y_j$ as
\begin{gather*} x_i - y_j = 2\gamma w_\text{SVM} + \delta^x_i + \delta^y_j, 
\end{gather*}
where $\delta^x_i,  \delta^y_j \in \mathbb R^d$ and $\|\delta^x_i\| \le R_x$ and $\|\delta^y_j\| \le R_y$. 
Then, condition (\ref{differdiffer}) is satisfied if
\[ \langle  2\gamma w_\text{SVM} + \delta^x_i + \delta^y_j , 2\gamma w_\text{SVM} + \delta^x_{i'} + \delta^y_{j'} \rangle > 0 \]
for all $i, i' \in I$ and for all $j, j' \in J$; or equivalently if
\begin{equation}
4\gamma^2 + 2\gamma \langle w_\text{SVM} , \delta^x_i + \delta^y_j + \delta^x_{i'} + \delta^y_{j'} \rangle + \langle \delta^x_i + \delta^y_j , \delta^x_{i'} + \delta^y_{j'} \rangle > 0 \label{unnecessary_eqn} \end{equation}
for all $i, i' \in I$ and for all $j, j' \in J$.
If we choose $\gamma > {5 \over 2} \max(R_x, R_y)$, we have
\[ 4\gamma^2 - 2\gamma(2R_x + 2R_y) - (R_x+R_y)^2 > 0, \]
which guarantees (\ref{unnecessary_eqn}) and completes the proof. \hfill $\blacksquare$

\bibliography{draft}

\begin{thebibliography}{26}
\providecommand{\natexlab}[1]{#1}
\providecommand{\url}[1]{\texttt{#1}}
\expandafter\ifx\csname urlstyle\endcsname\relax
  \providecommand{\doi}[1]{doi: #1}\else
  \providecommand{\doi}{doi: \begingroup \urlstyle{rm}\Url}\fi

\bibitem[Athalye et~al.(2018)Athalye, Carlini, and Wagner]{pmlr-v80-athalye18a}
Athalye, A., Carlini, N., and Wagner, D.
\newblock Obfuscated gradients give a false sense of security: Circumventing
  defenses to adversarial examples.
\newblock In Dy, J. and Krause, A. (eds.), \emph{Proceedings of the 35th
  International Conference on Machine Learning}, volume~80 of \emph{Proceedings
  of Machine Learning Research}, pp.\  274--283, Stockholmsmässan, Stockholm
  Sweden, 10--15 Jul 2018. PMLR.
\newblock URL \url{http://proceedings.mlr.press/v80/athalye18a.html}.

\bibitem[Bartlett et~al.(2017)Bartlett, Foster, and
  Telgarsky]{BartlettSpectral}
Bartlett, P.~L., Foster, D.~J., and Telgarsky, M.~J.
\newblock Spectrally-normalized margin bounds for neural networks.
\newblock In \emph{Advances in Neural Information Processing Systems}, pp.\
  6240--6249, 2017.

\bibitem[Bromley et~al.(1993)Bromley, W.~Bentz, Bottou, Guyon, Lecun, Moore,
  Sackinger, and Shah]{Bromley93}
Bromley, J., W.~Bentz, J., Bottou, L., Guyon, I., Lecun, Y., Moore, C.,
  Sackinger, E., and Shah, R.
\newblock Signature verification using a "siamese" time delay neural network.
\newblock \emph{International Journal of Pattern Recognition and Artificial
  Intelligence}, 7:\penalty0 25, 08 1993.
\newblock \doi{10.1142/S0218001493000339}.

\bibitem[Carlini \& Wagner(2017)Carlini and Wagner]{Carlini2017towards}
Carlini, N. and Wagner, D.
\newblock Towards evaluating the robustness of neural networks.
\newblock In \emph{2017 IEEE Symposium on Security and Privacy (SP)}, pp.\
  39--57. IEEE, 2017.

\bibitem[Chopra et~al.(2005)Chopra, Hadsell, and LeCun]{Chopra05}
Chopra, S., Hadsell, R., and LeCun, Y.
\newblock Learning a similarity metric discriminatively, with application to
  face verification.
\newblock In \emph{IEEE Computer Society Conference on Computer Vision and
  Pattern Recognition}, volume~1, pp.\  539--546, June 2005.

\bibitem[Fawzi et~al.(2017)Fawzi, Moosavi-Dezfooli, and Frossard]{Pascal}
Fawzi, A., Moosavi-Dezfooli, S., and Frossard, P.
\newblock The robustness of deep networks: A geometrical perspective.
\newblock \emph{IEEE Signal Processing Magazine}, 34\penalty0 (6):\penalty0
  50--62, Nov 2017.

\bibitem[Goodfellow et~al.(2015)Goodfellow, Shlens, and
  Szegedy]{Goodfellow2015Adversarial}
Goodfellow, I., Shlens, J., and Szegedy, C.
\newblock Explaining and harnessing adversarial examples.
\newblock In \emph{International Conference on Learning Representations}, 2015.

\bibitem[Hastie et~al.(2009)Hastie, Tibshirani, and
  Friedman]{hastie_09_elements-of.statistical-learning}
Hastie, T., Tibshirani, R., and Friedman, J.
\newblock \emph{The elements of statistical learning: data mining, inference
  and prediction}.
\newblock Springer, 2 edition, 2009.
\newblock URL \url{http://www-stat.stanford.edu/~tibs/ElemStatLearn/}.

\bibitem[He et~al.(2015)He, Zhang, Ren, and Sun]{he2015delving}
He, K., Zhang, X., Ren, S., and Sun, J.
\newblock Delving deep into rectifiers: Surpassing human-level performance on
  imagenet classification.
\newblock In \emph{Proceedings of the IEEE international conference on computer
  vision}, pp.\  1026--1034, 2015.

\bibitem[He et~al.(2016)He, Zhang, Ren, and Sun]{Resnet}
He, K., Zhang, X., Ren, S., and Sun, J.
\newblock Deep residual learning for image recognition.
\newblock In \emph{IEEE Conference on Computer Vision and Pattern Recognition},
  pp.\  770--778, 2016.

\bibitem[Ishibashi et~al.(2008)Ishibashi, Hatano, and Takeda]{Ishibashi}
Ishibashi, K., Hatano, K., and Takeda, M.
\newblock Online learning of maximum p-norm margin classifiers with bias.
\newblock In \emph{21st Annual Conference on Learning Theory - {COLT} 2008,
  Helsinki, Finland, July 9-12, 2008}, pp.\  69--80, 2008.
\newblock URL \url{http://colt2008.cs.helsinki.fi/papers/48-Ishibashi.pdf}.

\bibitem[Keerthi et~al.(1999)Keerthi, Shevade, Bhattacharyya, and
  Murthy]{Keerthi99}
Keerthi, S., Shevade, S.~K., Bhattacharyya, C., and Murthy, K.
\newblock A fast iterative nearest point algorithm for support vector machine
  classifier design.
\newblock \emph{IEEE Transactions on Neural Networks}, 11:\penalty0 124--136,
  1999.

\bibitem[Koch et~al.(2015)Koch, Zemel, and Salakhutdinov]{SiameseOneShot}
Koch, G., Zemel, R., and Salakhutdinov, R.
\newblock Siamese neural networks for one-shot image recognition.
\newblock In \emph{ICML Deep Learning Workshop}, volume~2, 2015.

\bibitem[Krizhevsky et~al.(2012)Krizhevsky, Sutskever, and Hinton]{AlexNet}
Krizhevsky, A., Sutskever, I., and Hinton, G.~E.
\newblock Imagenet classification with deep convolutional neural networks.
\newblock In \emph{Advances in Neural Information Processing Systems}, pp.\
  1097--1105, 2012.

\bibitem[Kurakin et~al.(2016)Kurakin, Goodfellow, and Bengio]{Kurakin2016}
Kurakin, A., Goodfellow, I., and Bengio, S.
\newblock Adversarial machine learning at scale.
\newblock \emph{arXiv preprint arXiv:1611.01236}, 2016.

\bibitem[Madry et~al.(2018)Madry, Makelov, Schmidt, Tsipras, and Vladu]{Madry}
Madry, A., Makelov, A., Schmidt, L., Tsipras, D., and Vladu, A.
\newblock Towards deep learning models resistant to adversarial attacks.
\newblock In \emph{International Conference on Learning Representations}, 2018.

\bibitem[Martin \& Mahoney(2018)Martin and Mahoney]{mahoney2018}
Martin, C.~H. and Mahoney, M.~W.
\newblock Implicit self-regularization in deep neural networks: Evidence from
  random matrix theory and implications for learning.
\newblock \emph{CoRR}, abs/1810.01075, 2018.
\newblock URL \url{http://arxiv.org/abs/1810.01075}.

\bibitem[{Marzi} et~al.(2018){Marzi}, {Gopalakrishnan}, {Madhow}, and
  {Pedarsani}]{Ramtin}
{Marzi}, Z., {Gopalakrishnan}, S., {Madhow}, U., and {Pedarsani}, R.
\newblock {Sparsity-based Defense against Adversarial Attacks on Linear
  Classifiers}.
\newblock \emph{ArXiv e-prints}, 2018.

\bibitem[Moosavi-Dezfooli et~al.(2017)Moosavi-Dezfooli, Fawzi, Fawzi, and
  Frossard]{Universal}
Moosavi-Dezfooli, S.-M., Fawzi, A., Fawzi, O., and Frossard, P.
\newblock Universal adversarial perturbations.
\newblock In \emph{IEEE Conference on Computer Vision and Pattern Recognition},
  pp.\  86--94, 2017.

\bibitem[Nar \& Sastry(2018)Nar and Sastry]{nipsStepSize}
Nar, K. and Sastry, S.
\newblock Step size matters in deep learning.
\newblock In Bengio, S., Wallach, H., Larochelle, H., Grauman, K.,
  Cesa-Bianchi, N., and Garnett, R. (eds.), \emph{Advances in Neural
  Information Processing Systems 31}, pp.\  3440--3448. Curran Associates,
  Inc., 2018.
\newblock URL
  \url{http://papers.nips.cc/paper/7603-step-size-matters-in-deep-learning.pdf}.

\bibitem[Rauber et~al.(2017)Rauber, Brendel, and Bethge]{foolbox}
Rauber, J., Brendel, W., and Bethge, M.
\newblock Foolbox: a python toolbox to benchmark the robustness of machine
  learning models (2017).
\newblock \emph{URL http://arxiv. org/abs/1707.04131}, 2017.

\bibitem[Schmidt et~al.(2018)Schmidt, Santurkar, Tsipras, Talwar, and
  Madry]{schmidt2018}
Schmidt, L., Santurkar, S., Tsipras, D., Talwar, K., and Madry, A.
\newblock Adversarially robust generalization requires more data.
\newblock \emph{arXiv preprint arXiv:1804.11285}, 2018.

\bibitem[Simonyan \& Zisserman(2014)Simonyan and Zisserman]{VGG}
Simonyan, K. and Zisserman, A.
\newblock Very deep convolutional networks for large-scale image recognition.
\newblock \emph{CoRR}, abs/1409.1556, 2014.

\bibitem[{Soudry} et~al.(2018){Soudry}, {Hoffer}, {Nacson}, {Gunasekar}, and
  {Srebro}]{Soudry-March-2018}
{Soudry}, D., {Hoffer}, E., {Nacson}, M.~S., {Gunasekar}, S., and {Srebro}, N.
\newblock {The Implicit Bias of Gradient Descent on Separable Data}.
\newblock \emph{ArXiv e-prints}, 2018.

\bibitem[Szegedy et~al.(2013)Szegedy, Zaremba, Sutskever, Bruna, Erhan,
  Goodfellow, and Fergus]{Szegedy}
Szegedy, C., Zaremba, W., Sutskever, I., Bruna, J., Erhan, D., Goodfellow,
  I.~J., and Fergus, R.
\newblock Intriguing properties of neural networks.
\newblock \emph{CoRR}, abs/1312.6199, 2013.

\bibitem[Zhang et~al.(2017)Zhang, Bengio, Hardt, Recht, and
  Vinyals]{RechtUnderstanding}
Zhang, C., Bengio, S., Hardt, M., Recht, B., and Vinyals, O.
\newblock Understanding deep learning requires rethinking generalization.
\newblock In \emph{International Conference on Learning Representations}, 2017.

\end{thebibliography}
\bibliographystyle{icml2019}

\end{document}